\documentclass[conference]{IEEEtran}
\usepackage{times}

\usepackage[numbers]{natbib}
\usepackage{multicol}
\usepackage[bookmarks=true]{hyperref}
\usepackage{amssymb}
\usepackage{bbding}
\usepackage{amsmath}
\usepackage{xspace}
\usepackage{xcolor}
\usepackage{algorithm}
\usepackage{algpseudocode}
\usepackage{amsmath}
\usepackage{graphicx}
\usepackage{cuted} 
\usepackage{lipsum} 
\usepackage{capt-of}
\usepackage[dvipsnames]{xcolor}

\usepackage{enumitem}
\usepackage{amsthm}

\newtheorem{theorem}{Proposition}

\definecolor{citecolor}{HTML}{3498DB}
\definecolor{urlcolor}{HTML}{485DFF}

\newcommand{\method}{\textsc{OmniGuide}\xspace}

\hypersetup{
    colorlinks=true,      
    citecolor=citecolor,  
    urlcolor=urlcolor,    
    linkcolor=black       
}

\IEEEoverridecommandlockouts

\begin{document}

\title{OmniGuide: 
Universal Guidance Fields for Enhancing Generalist Robot Policies
}


\author{
Yunzhou Song$^{1*}$\thanks{$^*$Equal contribution}\;
Long Le$^{1*}$ \;
Yong-Hyun Park$^{1}$ \;
Jie Wang$^{1}$ \;
Junyao Shi$^{1}$ \;
Lingjie Liu$^{1}$ \;
Jiatao Gu$^{1}$ \;
Eric Eaton$^{1}$
\\
Dinesh Jayaraman$^{1}$ \quad
Kostas Daniilidis$^{1}$ \\
\\
$^{1}$University of Pennsylvania
}

\let\cite\citep
\newcommand{\LL}[1]{\textcolor{blue}{\small [LL: #1]}}
\newcommand{\JG}[1]{\textcolor{red}{\small [JG: #1]}}



%

\maketitle

\begin{abstract}
Vision-language-action (VLA) models have shown great promise  as generalist policies for a large range of relatively simple tasks. However, they demonstrate limited performance on more complex tasks, such as those requiring complex spatial or semantic understanding, manipulation in clutter, or precise manipulation. 
We propose \method, a flexible framework that improves VLA performance on such tasks by leveraging arbitrary sources of guidance, such as 3D foundation models, semantic-reasoning VLMs, and human pose models. We show how many kinds of guidance can be naturally expressed as differentiable energy functions with task-specific attractors and repellers located in 3D space, that influence the sampling of VLA actions. 
In this way, \method enables guidance sources with complementary task-relevant strengths to improve a VLA model's performance on challenging tasks.
Extensive experiments in both simulation and real-world environments, across diverse sources of guidance, demonstrate that \method enhances the performance of state-of-the-art generalist policies (e.g., $\pi_{0.5}$, GR00T N1.6) significantly across success and safety rates. Critically, our unified framework matches or surpasses the performance of prior methods designed to incorporate \textit{specific} sources of guidance into VLA policies. Project Page: \href{{https://omniguide.github.io/}}{https://omniguide.github.io/}

\end{abstract}

\IEEEpeerreviewmaketitle

\section{Introduction}

Today's predominant paradigm for developing generalist robotic policies is to train Vision-Language-Action (VLA) models via Behavior Cloning (BC) on massive human teleoperated datasets \cite{o2024open, khazatsky2024droid, walke2023bridgedata}. While this approach has yielded capable models such as Gemini Robotics \cite{team2503gemini, team2025gemini}, GR00T \cite{gr00tn1_2025}, MolmoAct \cite{lee2025molmoact}, and the $\pi$ series \cite{black2024pi0, intelligence2025pi05}, the scalability of pure imitation faces an inherent performance ceiling. 
Without dense coverage of the space of all possible environments and tasks—which is practically impossible to achieve via human teleoperation alone—these VLAs pre-trained with behavior cloning often fall short of mastery. Instead of becoming experts, pre-trained VLAs often emerge as \emph{jacks-of-all-trades, masters of none} -- possessing a broad understanding of diverse tasks but lacking the specialized precision required for reliable execution in a new environment. Consequently, even the most advanced VLA models often fail at the ``last mile" -- struggling with 3D collision avoidance, precise physical grounding, and robust articulated object manipulation \cite{gao2025taxonomy, zhang2025vlabench, zhang2025vla, hu2025vlsa}.

Common approaches~\cite{tan2025interactive, li2025vla, li2025jarvis} to bridge this gap require intensive post-training and fine-tuning on additional high-quality robotic data in the target environment, which is both expensive and scarce.

\begin{figure}
    \centering
    \vspace{1em}
    \includegraphics[width=\linewidth]{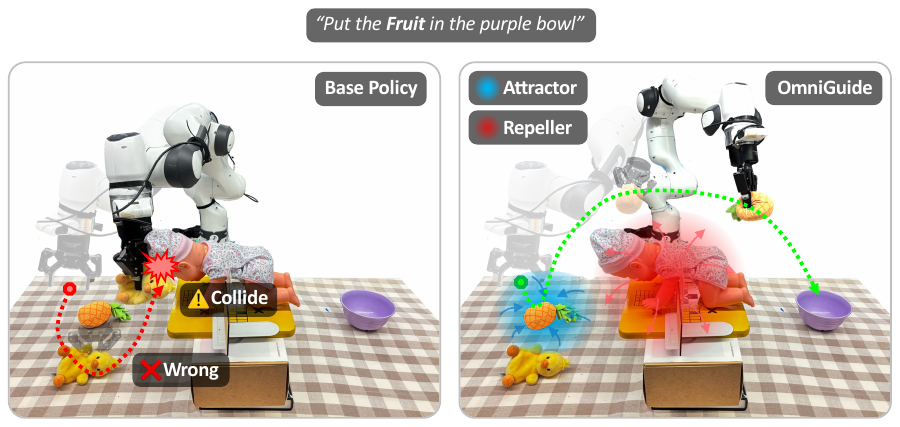}
    \vspace{-2em}
    \captionof{figure}{\method unifies different kinds of guidance via attractive and repulsive fields to improve the performance of generalist robot policies.}
    \label{fig:teaser}
    \vspace{-1em}
\end{figure}

We explore an alternative strategy to overcome these weaknesses of VLAs that relies on no additional robotic data, and no additional training. Instead, we propose that VLA models could simply ``get by with a little help from their friends'', namely, foundation models in perception, such as 3D geometry reconstruction, or human pose estimation, or Vision-Language Models (VLMs). 
But how might these perception models help a VLA to overcome its deficiencies?

We introduce \method, a general framework for improving VLA generalist policies through \textit{unified inference-time guidance}. \method applies to any VLA that generates actions through diffusion or flow matching. 
Rather than attempting to bake every constraint into the pre-training stage, \method leverages external foundation models -- such as 3D reconstruction \cite{wang2025vggt} for safety-critical collision avoidance, VLMs \cite{comanici2025gemini} for semantic guidance, and hand-tracking \cite{ye2025predicting} for one-shot human demonstrations -- to inject knowledge otherwise missing in the VLA, and steer the policy’s actions at test-time.

We unify these diverse sources of guidance through a differentiable energy function over 3D space, whose spatial gradients represent two types of ``forces'': attraction toward task-relevant regions such as goals, and repulsion away from undesirable regions such as collisions and other safety constraints. 
These external gradients, in conjunction with the VLA’s native velocity field narrows the VLA’s multi-modal solution space, steering the robot toward actions that are simultaneously task-effective, safe, and physically grounded, as shown in Fig.~\ref{fig:teaser}.


To validate our approach, we perform extensive experiments both in simulation and real-world across three classes of guidance and two generalist VLA base models \cite{intelligence2025pi05, gr00tn1_2025}. \method consistently improves the base VLA models across different metrics, including increasing success rates from 24.2\% to 92.4\% and collision avoidance rates from 7.0\% to 93.5\%, all without incurring significant execution latencies or requiring retraining. 
We summarize our main contributions here:
\begin{enumerate}[leftmargin=*]
\item We introduce test-time guidance for any differentiable generative policy, agnostic of how it has been trained or of what particular VLA expert was implemented.
\item Our guidance term can express attractive (semantically grounded targets, human demonstrations) or repulsive (obstacle avoidance) energy fields, as well as their synergy. These energies and their gradients are computed in real-time, thus allowing  both safe operation in dynamic environments and test-time steering towards target trajectories.
\item Our framework is versatile: we demonstrate that it permits many perception foundation models, sometimes in tandem, to guide state-of-the-art VLA policies and improve their performance on challenging tasks. 
We experimentally analyze the trade-offs between staying on the prior distribution vs.~being safe or following new targets.
\end{enumerate}








\section{Related Work}
\subsection{Generalist Robotic Manipulation Policies}
The current frontier in generalist robot manipulation is driven by Vision-Language-Action (VLA) models, which are large, multimodal systems that unify perception, reasoning, and control by coupling VLMs with generative models for actions. These end-to-end architectures ~\cite{rt22023arxiv, kim24openvla, octo_2023, black2024pi0,li2024cogact} allow robots to leverage the vast semantic knowledge embedded in VLM backbones. However, recent evaluations reveal several key limitations, including shortcomings in grounding abstract language to diverse scenarios~\cite{team2503gemini, team2025gemini, intelligence2025pi05, liang2025pixelvla}, long-term task planning ~\cite{shi2025memoryvla, jang2025contextvla, zhao2025cot}, and spatial perception and reasoning ~\cite{lee2025molmoact, qu2025spatialvla, zheng2024tracevla, spatialforcing2025}. While contemporary efforts attempt to mitigate these failures through supervised finetuning on curated datasets \cite{khazatsky2024droid,lee2025molmoact,intelligence2025pi05} or reinforcement learning \cite{chen2025tgrpo, intelligence2025pi} for broader action exploration, these approaches incur prohibitive computational costs and risk degrading the foundational capabilities of the base model via catastrophic forgetting. 


\subsection{Guidance for Generative Policies}
Test-time guidance has emerged as a powerful paradigm for steering the generative process of diffusion~\cite{pmlr-v283-chatzipantazis25a, park2025demodiffusion, wagenmaker2025steering, wang2025inference, sun2025latent, du2025dynaguide, carvalho2025motion} or flow matching~\cite{black2025real, dai2025safe} policies, allowing the incorporation of extra constraints or objectives beyond those initially trained. However, previous works are either limited to modifying only the initial noise distribution~\cite{park2025demodiffusion, wagenmaker2025steering, yang2025steering,kwok2025robomonkey,liang2025rover} or rely on small, specialized models trained on narrow domains, such as the latent space or dynamic models~\cite{du2025dynaguide, sun2025latent, gupta2025umi, nakamoto2024steering, luo2025concept}, which hinder their utility in more general in-the-wild robotic settings. 
Recent approaches introduced safety constraints to flow matching:
SafeFlowMatcher \cite{yang2025safeflowmatcher} biases the learned flow toward safety during training, whereas SafeFlow \cite{li2025safeflow} enforces safety on the generative dynamics at inference time by solving a quadratic optimization at each denoising step. None of them is tested in complex environments and tasks or in combination with any generalist policy.
The closest work to ours is Inference-time Policy Steering~\cite{wang2025inference}. While their method is limited to human demonstrations and requires 
{iterative MCMC sampling process} and expertise-demanding robotic kinesthetic teaching, \method demonstrates a unified guidance framework over several guidance types {including both the initial noise prior and intermediate conditional distributions along the generative trajectory}, and leverages a hand-tracking model for ease of use.

\subsection{Harnessing Foundation Priors for Robot Manipulation}


While modern robot manipulation systems are increasingly powered by large Vision-Language Models (VLMs) for task specification and generalization~\cite{shi2025hi, dalal2025local,shi2025maestro, huang2024rekep}, most other foundation models with rich physical or geometric understanding remain underexploited in robotic control. Meanwhile, a growing ecosystem of specialized foundation models—such as 3D reconstruction and geometry-aware models ~\cite{wang2025vggt, song2025avi, liu2025geometry}, articulation and kinematic reasoning models~\cite{le2024articulate, le2025pixie, mandi2024real2code, liu2024singapo}, human motion~\cite{pavlakos2024reconstructing, ye2025predicting}, and object tracking systems~\cite{karaev2025cotracker3, song2024track, wen2024foundationpose} offer powerful priors. Yet, it remains unclear how to harness these heterogeneous models to improve robotic policies. \method addresses this challenge by incorporating these diverse foundation models, ranging from VLMs to 3D geometry and human priors, through a unified inference-time guidance mechanism without having to retrain the base VLAs.

\section{OmniGuide: Universal Guidance Fields for Enhancing Generalist Robot Policies}

\subsection{Background}
\label{subsec:background}
We present a unified inference-time guidance framework for flow-matching-based
Vision-Language-Action (VLA) policies. Our key idea is to treat diverse sources of
task constraints, like obstacle avoidance, semantic goals, and human demonstrations—
as \emph{energy functions defined over Cartesian trajectories}. These energies
are used to shape the generative flow of a pretrained VLA policy during inference,
without retraining or collecting additional data. Below, we provide the mathematical framework for injecting guidance into flow-matching policies. The derivations for diffusion policies are similar \cite{gao2025diffusionmeetsflow, lai2025principles}.

\vspace{0.05in}
\noindent\textbf{Action-Chunk Flow Matching as a Generative Prior.~~}
Flow matching provides a framework for modeling complex distributions by learning to transform samples from a simple base distribution (e.g., Gaussian noise) to the target data distribution (e.g., teleoperation robot trajectories). 
We consider a pretrained VLA policy that generates an action chunk
\(
\mathbf{A}_t = [\mathbf{a}_t, \mathbf{a}_{t+1}, \dots, \mathbf{a}_{t+H-1}]
\)
conditioned on the current observation \(\mathbf{o}_t\). For simplicity, we drop the time subscript $t$ when the temporal index is clear from context.
Depending on the policy, actions may live in joint space or a learned latent space.
Gr00t \citep{gr00tn1_2025} operates in a learned latent action space, whereas $\pi_{0.5}$ \citep{intelligence2025pi05} uses action chunks in the robot joint space. When this distinction is not required, we refer to all such representations collectively as $\mathbf{A}$ for clarity. 


The VLA policy is trained using flow matching to model the conditional distribution \(p(\mathbf{A} | \mathbf{o})\). To distinguish it from the action time $t$, we denote the denoising time as $\tau$. Flow matching defines a continuous-time($\tau$) generative process that transports samples from a simple base distribution to the data distribution.
Specifically, the model learns a time-dependent vector field
\begin{equation}
\frac{d \mathbf{A}^{\tau}}{d\tau} = \mathbf{v}_\theta(\mathbf{A}^{\tau}, \mathbf{o}),
\qquad \tau \in [0,1],
\end{equation}
where \(\tau=0\) corresponds to noise and \(\tau=1\) to clean samples.
At inference time, actions are generated by integrating this ODE starting from
\(\mathbf{A}^{0} \sim \mathcal N(\mathbf{0},\mathbf{I})\) to obtain a clean action chunk \(\mathbf{A}^1\) at \(\tau=1\).

Formally, given a forward process $q(\mathbf{A}^{\tau}|\mathbf{A}^1) = \mathcal{N}(\tau\mathbf{A}^1, (1-\tau)^2\mathbf{I})$, the network $\mathbf{v}_\theta(\mathbf{A}^{\tau}, \mathbf{o})$ is trained to match the conditional vector field $\mathbf{u}(\mathbf{A}^{\tau}|\mathbf{A}^1)$ by minimizing \citep{lipman2022flow, liu2022flow}
\begin{equation}
\mathcal{L}(\theta) = \mathbb{E}_{p(\mathbf{A}^1|\mathbf{o}), q(\mathbf{A}^{\tau}|\mathbf{A}^1)} \left[\|\mathbf{v}_\theta(\mathbf{A}^{\tau}, \mathbf{o}) - \mathbf{u}(\mathbf{A}^{\tau}|\mathbf{A}^1)\|^2_2 \right],
\end{equation}
where the conditional vector field follows from the linear interpolation path $\mathbf{A}^{\tau} = (1-\tau)\mathbf{A}^0 + \tau \mathbf{A}^1$ \citep{lipman2022flow, liu2022flow}:
\begin{equation}
\mathbf{u}(\mathbf{A}^{\tau}|\mathbf{A}^1) = \mathbf{A}^1 - \mathbf{A}^0.
\end{equation}

It is known that the learned velocity has a closed-form relation with the score function $\nabla_{\mathbf{A}^{\tau}} \log p_\tau(\mathbf{A}^{\tau})$ \citep{lipman2022flow, kim2025flowdps}:
\[
\mathbf{v}_\theta(\mathbf{A}^{\tau}, \mathbf{o}) = -\frac{1}{1-\tau}\mathbf{A}^{\tau}+\frac{\tau}{1-\tau}\nabla_{\mathbf{A}^{\tau}} \log p_\tau(\mathbf{A}^{\tau}|\mathbf{o}).
\]

At inference time, action chunks are generated by integrating the learned vector field from $\tau=0$ (random noise), to $\tau=1$ (clean sample), starting with random noise $\mathbf{A}^{0} \sim \mathcal{N}(\mathbf{0}, \mathbf{I})$ and ending at the data distribution, using an ODE solver:
\begin{equation}
\label{eq:denoise}
\mathbf{A}^{\tau+\delta} = \mathbf{A}^{\tau} + \delta \cdot \mathbf{v}_\theta(\mathbf{A}^{\tau}, \mathbf{o})
\end{equation}
This generative process produces diverse, multimodal action distributions that capture the stochasticity inherent in robotic manipulation tasks. Using Tweedie's formula, we estimate the posterior mean of the clean action trajectory $\tilde{\mathbf{A}}^{\tau}$ at denoising timestep $\tau$ as  \citep{efron2011tweedie, chung2022diffusion, kim2025flowdps}:
\begin{equation}
    \label{eq:clean}
    \tilde{\mathbf{A}}^{\tau} = \mathbb{E}[\mathbf{A}^{1}|\mathbf{A}^{\tau}]= \mathbf{A}^{\tau} + (1-\tau)\mathbf{v}_\theta(\mathbf{A}^{\tau}, \mathbf{o})
\end{equation}
This provides an approximation of the clean sample corresponding to the current noisy action.

\begin{figure}
    
    \centering
    \includegraphics[width=1.0\linewidth]{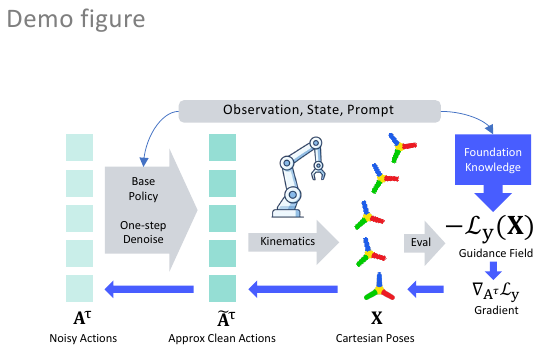}
    \vspace{-1em}
    \caption{\textbf{Method Overview.} For each denoising step, \method first estimates the clean action chunk $\tilde{\mathbf{A}}^{\tau}$ by the base policy $\mathbf{v}_\theta$, and then decodes it into joint space. A differentiable dynamics/kinematic model is then used to obtain the robot's Cartesian trajectories $\mathbf{X}$, which are evaluated using the energy functions $\mathcal{L}_\mathbf{y}$ extracted from foundation models. Finally, the gradient will be backpropagated through the robot model and all neural networks, yielding a guidance vector on the noisy latent action chunk $\mathbf{A}^{\tau}$.}
    \label{fig:method}
    \vspace{-1em}
\end{figure}

The pretrained VLA defines a \emph{naturalness prior} over actions, capturing semantic understanding,
smoothness, kinematics feasibility, and contact patterns present in the training data. 

\vspace{0.05in}
\noindent\textbf{Guidance for Controllable Generation.~~}
However, the pretrained prior alone does not enforce constraints such as collision avoidance or semantic grounding.
To incorporate these additional conditions $\mathbf{y}$, we aim to sample from the posterior distribution $p(\mathbf{A}^{\tau} | \mathbf{y})$ rather than the unconditional $p(\mathbf{A}^{\tau})$. By Bayes rule:
\begin{equation}
\nabla_{\mathbf{A}^{\tau}} \log p(\mathbf{A}^{\tau} | \mathbf{y}) = \nabla_{\mathbf{A}^{\tau}} \log p(\mathbf{A}^{\tau}) + \nabla_{\mathbf{A}^{\tau}} \log p(\mathbf{y} | \mathbf{A}^{\tau})
\end{equation}

Substituting this into the velocity-score relation yields the posterior velocity field:
\begin{equation}
\mathbf{v}_\theta(\mathbf{A}^{\tau}, \mathbf{o} \mid \mathbf{y}) = \mathbf{v}_\theta(\mathbf{A}^{\tau}, \mathbf{o}) + \lambda_\tau \nabla_{\mathbf{A}^{\tau}} \log p(\mathbf{y} | \mathbf{A}^{\tau}),
\end{equation}
where $\lambda_\tau$ denotes the flow time-dependent guidance strength. For stability, we use a constant guidance strength $\lambda$ across all timesteps \citep{patel2024steering}. The guided generation process thus becomes:
\begin{equation}
\mathbf{A}^{\tau+\delta} = \mathbf{A}^{\tau} + \delta \cdot \left( \mathbf{v}_\theta(\mathbf{A}^{\tau}, \mathbf{o}) + \lambda \nabla_{\mathbf{A}^{\tau}} \log p(\mathbf{y} | \mathbf{A}^{\tau}) \right)
\label{eqn:intermediate}
\end{equation}
 This formulation enables test-time steering toward task-specific objectives without retraining the base policy.

Rather than interpreting \(\mathbf{y}\) as a discrete conditioning variable, we adopt an energy-based view and define the task-induced energy \citep{liu2022compositional, du2023reduce}
\begin{equation}
\mathcal{L}_\mathbf{y}(\tilde{\mathbf{A}}^{\tau}) := -\log p(\mathbf{y} | \tilde{\mathbf{A}}^{\tau}).
\end{equation}
where $\tilde{\mathbf{A}}^{\tau}$ is the approximated clean action as Eq.~\ref{eq:clean}. Guided flow matching can then be interpreted as \emph{energy shaping}:
The generative process follows the superposition of the learned naturalness field
and the gradient of a task-defined energy.

\subsection{Guidance Defined in Cartesian Space}

Most task constraints of interest are naturally defined in Cartesian space rather than in action space, and they apply to the \emph{clean} trajectory that the policy will
ultimately execute.
Therefore, guidance cannot be evaluated directly on noisy actions \(\mathbf{A}^{\tau}\).

At each flow step \(\tau\), we estimate guidance gradients through the following procedure, as shown in Fig.~\ref{fig:method}:
\begin{enumerate}
    \item Estimate the clean action chunk \(\tilde{\mathbf{A}}^{\tau}\) using Eq.~\eqref{eq:clean}.
    \item Decode \(\tilde{\mathbf{A}}^{\tau}\) from latent to joint space if required by the specific policy.
    \item Predict the corresponding Cartesian trajectory using a differentiable dynamics and kinematics model\citep{liu2024differentiablerobotrendering}:
    \begin{equation}
    \mathbf{X} = f(\mathbf{p} |\tilde{\mathbf{A}}^{\tau}, \mathbf{s} )
    \label{eq:dynamics}
    \end{equation}
    where \(\mathbf{p}\) denotes probe points or frames rigidly attached to the robot (e.g. the end-effector 3D coordinates) and \(\mathbf{s}\) is the robot state.
    \item Evaluate a task-specific energy \(\mathcal{L}_{\mathbf{y}}(\mathbf{X})\) in Cartesian space.
\end{enumerate}

The guidance gradient visualized in Fig.~\ref{fig:gradient} is obtained by backpropagating through this chain:
\begin{equation}
\nabla_{\mathbf{A}^{\tau}} \log p(\mathbf{y}|\mathbf{A}^{\tau})
=
- \nabla_{\mathbf{A}^{\tau}} \mathcal{L}_{\mathbf{y}}(\mathbf{X}).
\end{equation}

The final denoising update becomes
\begin{equation}
\mathbf{A}^{\tau+\delta}
=
\mathbf{A}^{\tau}
+
\delta\Big(
\mathbf{v}_\theta(\mathbf{A}^{\tau}, \mathbf{o})
-
\lambda \,\text{clip}(\nabla_{\mathbf{A}^{\tau}} \mathcal{L}_{\mathbf{y}}(\mathbf{X}), \alpha)
\Big),
\end{equation}
where gradients are clipped to some constant $\alpha$ for numerical stability.

In addition to intermediate guidance during denoising, we optionally apply guidance at the initial noise distribution. Motivated by recent findings that combining initial-distribution guidance with denoising guidance improves performance \citep{yoon2025psi}, we sample $N$ initial noise $\{\mathbf{A}^{0}_i\}_{i=1}^N$ from the prior, perform denoising with few integration steps to obtain approximately clean actions $\{\mathbf{A}^{1}_i\}_{i=1}^N$, and select the one with the lowest energy $\mathcal{L}_{\mathbf{y}}(\mathbf{A}^{1}_i)$. This procedure can be interpreted as a Monte-Carlo approximation to the zero-temperature limit ($\alpha \to 0$) of an energy-reweighted prior \citep{mackay2003information}:
\begin{equation}
p^*(\mathbf{A}^{0}) \propto p(\mathbf{A}^{0}) \exp(-\mathcal{L}_{\mathbf{y}}(\mathbf{A}^{1})/\alpha).
\label{eqn:init}
\end{equation}

\subsection{A Universal Spatial Guidance Field}
\label{sec:costf}

\begin{figure}
    \centering
    \includegraphics[width=1.0\linewidth]{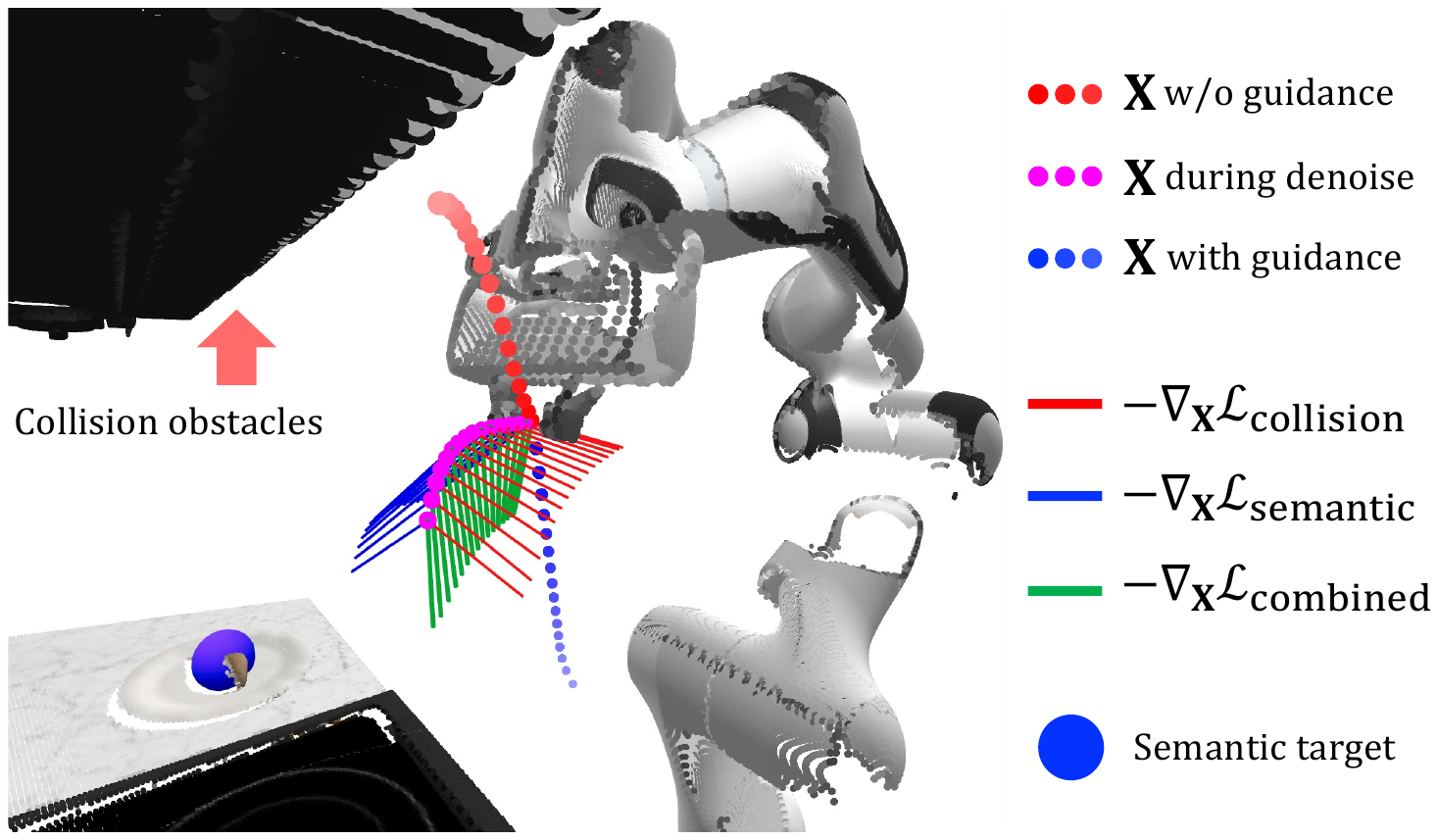}
    \vspace{-2em}
    \caption{\textbf{Guidance Visualization}. We visualize the guidance gradient(lines) on the predicted Cartesian trajectories(dots), which will be backpropagated to the latent space as denoising guidance. The guidance gradient from {collision energy} repels the trajectories from obstacles, and the {semantic energy} gradient attracts the gripper for the grounded target. The two guidances are naturally blended in the space, yielding a {joint guidance gradient} which steers the denoising to a safe and task-oriented state.}
    \label{fig:gradient}
    \vspace{-1em}
\end{figure}

All guidance sources in our framework instantiate the same abstraction:
they define \emph{attractive or repulsive energy fields in 3D space} that act on the
predicted Cartesian trajectory.
These fields are projected to action space via differentiable kinematics and shape
the VLA's generative process.
We describe three instantiations used in this work.

\vspace{0.05in}
\noindent\textbf{Collision Avoidance via Repulsive Fields.~~}
We construct this objective function upon the environment point cloud $\{\mathbf{p}\}$, which can be built either from RGB-D images or via calibrated VGGT predictions (see Appendix for details). We utilize CLIP features~\cite{radford2021learning} and mesh rendering to remove task-relevant regions like the object of interest and robot body from the collision point cloud.

We quantize the point cloud into discrete occupancy grids $O\in\mathbb{R}^{H\times W \times L}$, and compute a discrete signed distance function $\mathrm{SDF}_O(\mathbf{x})$ based on it.

Given a bounded finite risk region ${\Omega} = \{\mathbf{x} \in \mathbb{R}^3 \mid 0 < \mathrm{SDF}_O(\mathbf{x}) \le d\}$, where $d$ is a pre-defined barrier distance, the integral value $Z = \int_\Omega \mathrm{SDF}_O(\mathbf{x}) \, d\mathbf{x}$ over ${\Omega}$ is finite as shown in the appendix. 
Since depth-based reconstruction captures only visible surfaces rather than solid volumes, encountering $\mathrm{SDF}_O(\mathbf{x}) < 0$ is rare in practice.
Thus, we can model the safety probability $p_C(\mathbf{x})$ as 
\begin{equation}
    \label{eq:p_c}
    p_C(\mathbf{x}) = \frac{\mathrm{SDF}_O(\mathbf{x})}{Z}, \quad \mathbf{x}\in \Omega,
\end{equation}
and define a repulsive energy 
\begin{equation}
\label{eq:L_collision}
    \mathcal{L}_C(\mathbf{x})= -\log p_C(\mathbf{x}) = -\log \mathrm{SDF}_O(\mathbf{x}) + \text{Const}.
\end{equation}
By the definition of the SDF, the gradient with respect to the query position $\mathbf{x}$ takes the form
\begin{equation}
    -\nabla_\mathbf{x} \mathcal{L}_C(\mathbf{x}) = \frac{1}{\mathrm{SDF}_O(\mathbf{x})} \frac{\mathbf{x}-\mathbf{p}^*}{\|\mathbf{x}-\mathbf{p}^*\|},
\end{equation}
where $\mathbf{p}^*$ is the closest point on the obstacle surface to $\mathbf{x}$. This expression is easy to interpret: the first term ensures the gradient strength decreases as the distance to the obstacle increases, and the second term directs the gradient away from the obstacle. 
We approximate the discrete SDF gradient by finite differences $\nabla \mathrm{SDF}_O(\mathbf{x}_i) \approx \frac{\mathrm{SDF}_O(\mathbf{x}_{i+1})-\mathrm{SDF}_O(\mathbf{x}_i)}{\Delta_i}$, and apply trilinear interpolation for continuous queries $\mathbf{x}$.
As the representation is discrete and feedforward, we can update the 3D environment swiftly for each inference query, enabling interaction with a changing environment.

\vspace{0.05in}
\noindent
\textbf{Semantic Grounding via Attractive Targets.~~}
To augment the semantic grounding capabilities of the VLA base policy, we make use of the zero-shot spatial reasoning of state-of-the-art VLMs. Specifically, we query the VLM with the current image observations and task instruction to localize the most task-relevant object or affordance in pixel space, denoted as $\mathbf{u}^*\in \mathbb{R}^2$. Leveraging the aligned depth map, we back-project its 2D coordinates into the 3D world frame to obtain a spatial target centroid $\mathbf{x}^*\in\mathbb{R}^3$. We model the likelihood of the predicted end-effector position $\mathbf{x}$ successfully grounding to this target as a Gaussian distribution, centered at $\mathbf{x}^*$ with variance $\sigma_S^2$:
\begin{equation}
\label{eq:p_s}
p_S(\mathbf{x}) = \mathcal{N}(\mathbf{x} \mid \mathbf{x}^*, \sigma_S^2\mathbf{I}).
\end{equation}
The resulting semantic grounding attractive energy $\mathcal{L}_S$ is the negative log-likelihood of this distribution:
\begin{equation}
\label{eq:L_s}
\mathcal{L}_S(\mathbf{x}) = -\log p_S(\mathbf{x}) = \frac{\|\mathbf{x} - \mathbf{x}^*\|_2^2}{2\sigma_S^2} + \text{Const}.
\end{equation}
Complementary to this positional guidance, we formulate an orientation objective to align the gripper's forward vector with the approach direction defined by the vector $\mathbf{x}^* - \mathbf{x}$. The detailed derivation of this orientation constraint is provided in the Appendix.

\vspace{0.05in}
\noindent
\textbf{Human Demonstrations as Sparse Trajectory Attractors.~~}
While human demonstrations are a readily accessible source of task data, utilizing them for direct control is challenged by the inherent kinematic heterogeneity between human and robot structures~\cite{shi2025zeromimic}. Fortunately, advances in human hand pose estimation provide a robust mechanism to bridge this domain gap. We leverage one-shot human demonstrations to guide the robot through spatially sensitive manipulation tasks. Specifically, we employ HaPTIC~\cite{ye2025predicting} to extract human wrist positions from demonstration sequences, yielding a reference trajectory $\mathcal{H} = \{\mathbf{h}_0, \dots, \mathbf{h}_{N-1}\}$, where $\mathbf{h}\in\mathbb{R}^3$. To align the robot's predicted end-effector trajectory $\mathcal{X} = \{\mathbf{x}_t\}_{t=0}^{H-1}$ with this human reference, we propose a monotonic matching strategy inspired by Dynamic Time Warping (DTW), as detailed in Alg.~\ref{alg:alignment}. 
This procedure enforces temporal monotonicity; however, since the robot's planning horizon and the demonstration length may differ, a one-to-one correspondence is not guaranteed. Consequently, guidance is applied exclusively to the subset of matched poses. For every identified correspondence in the set $\mathcal{M}$, we model the spatial alignment probability as a Gaussian distribution $p_H(\mathbf{x}\mid\mathbf{h}^*) = \mathcal{N}(\mathbf{x} \mid \mathbf{h}^*, \sigma_H^2\mathbf{I})$. This formulation yields a trajectory-level attraction energy given by the negative log-likelihood as follows:
\begin{equation}
\label{eq:L_H}
\mathcal{L}_H(\mathcal{X}) = \frac{1}{2\sigma_H^2} \sum_{(\mathbf{x}_i, \mathbf{h}^*_i)\in\mathcal{M}}  \|\mathbf{x}_i - \mathbf{h}_i^*\|^2_2 + \text{Const}.
\end{equation}
The guidance fields we introduced are deterministic and task-specific, and may exhibit local minima or incomplete modeling of contact dynamics.
The pretrained VLA policy, by contrast, provides a stochastic and diverse prior over action chunks.
Guided flow matching combines these complementary strengths:
The base policy supplies diversity and realism, while guidance shapes generation toward task-relevant regions of the action space.

\begin{algorithm}[t]
\caption{Monotonic Trajectory Alignment}
\label{alg:alignment}
\begin{algorithmic}[1]
\Require Predicted Robot Trajectory $\mathcal{X} = (\mathbf{x}_0, \dots, \mathbf{x}_{H-1})$, where $\mathbf{x} \in \mathbb{R}^3$
\Require Reference Human Trajectory $\mathcal{H} = (\mathbf{h}_0, \dots, \mathbf{h}_{N-1})$, where $\mathbf{h}_k \in \mathbb{R}^3$
\Ensure Matched Correspondence Set $\mathcal{M}$

\State \textbf{Initialize:} $\mathcal{M} \leftarrow \emptyset$, start index $k_{\text{curr}} \leftarrow 0$

\For{$t = 0, \dots, H-1$}
    \State \Comment{Find nearest neighbor in the remaining horizon}
    \State $k^* \leftarrow \operatorname*{arg\,min}_{k \ge k_{\text{curr}}} \| \mathbf{x}_t - \mathbf{h}_k \|_2$
    
    \State $\mathcal{M} \leftarrow \mathcal{M} \cup \{(\mathbf{x}_t, \mathbf{h}_{k^*})\}$ \Comment{Store the matched pair}
    
    \State $k_{\text{curr}} \leftarrow k^*$ \Comment{Pruning for temporal monotonicity}
    
    \If{$k_{\text{curr}} = N-1$}
        \State \textbf{break} \Comment{Reference trajectory exhausted}
    \EndIf
\EndFor

\State \Return $\mathcal{M}$
\end{algorithmic}

\end{algorithm}



    
    
    


\section{Experiments}

The goal of our experiments is to understand when, why, and how inference-time guidance improves generalist Vision-Language-Action (VLA) policies. While \method is designed as a general and modular framework, its effectiveness depends on interacting factors: the strength of the pretrained VLA prior, the nature of the guidance signal, the trade-off between naturalness and constraint satisfaction, and the ability to compose multiple heterogeneous objectives.


\begin{figure*}
    \centering
    \includegraphics[width=1\linewidth]{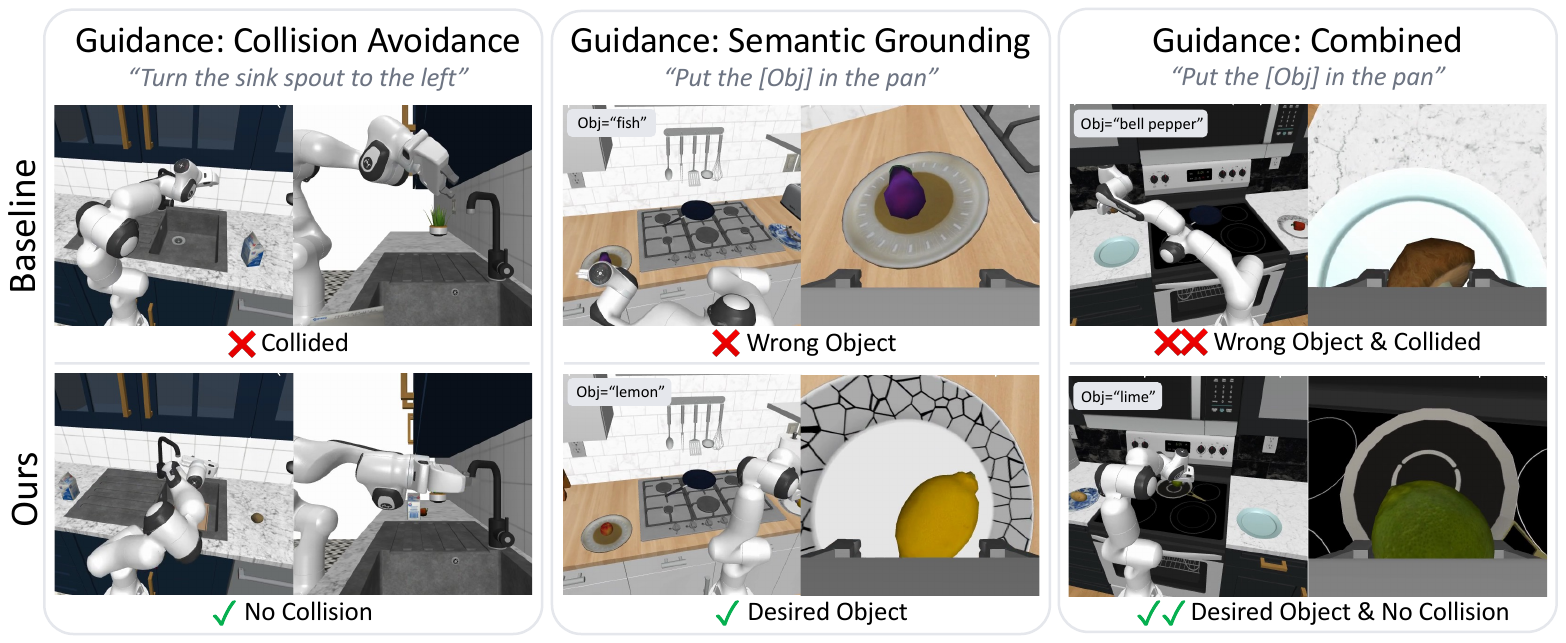}
    
    \caption{\textbf{Qualitative Results of Simulation Experiments.} \method demonstrates strong generalization capabilities and flexibility across diverse tasks. We show the individual and joint effects of our method on different tasks. Left: collision avoidance guidance for the \textit{TurnSinkSpout} task. Middle: semantic grounding guidance for the \textit{Multi-Choice} task. Right: combination of the two guidance for the \textit{Multi-Choice} in a clutter scene.}
    \label{fig:sim_vis}
\end{figure*}
\subsection{Simulation Experiments}

We evaluate our unified guidance framework in a high-fidelity simulation using the RoboCasa benchmark~\cite{robocasa2024}. Our system is built upon the NVIDIA GR00T N1.6-3B foundation model, which serves as the backbone policy predicting relative Cartesian end-effector motions for a Franka Emika Panda robot. To study the synergy between guidance and generative priors, we exclude perception artifacts from foundation models by using ground-truth depth and object poses provided by the simulator.

\vspace{0.5em}
\noindent\textbf{Guidance Tasks:} We curate a challenging suite of tasks to test distinct guidance modalities. To evaluate \method for collision avoidance, we introduce extra clutter into two RoboCasa tasks: pick-and-place and turning the sink spout in a kitchen.
Fig~\ref{fig:sim_vis} shows these tasks. Note that we do not alter the original definition of success for these RoboCasa tasks. Thus, collision avoidance is only incidentally related to ``success'', and policies that incur frequent collisions can still have high ``success rates''. To directly measure collision avoidance behavior, we also track ``safety rates'' for each policy, where an episode is considered a safety failure if even one collision occurs between the robot body and static furniture.

Next, for semantic grounding, we construct a \textit{Multi-Choice} task following \citep{li2025hamster, cheang2025gr}, where the agent must identify and manipulate a specific target object from multiple candidates initialized at randomized locations (see Fig~\ref{fig:sim_vis} for an example). Note that later in Sec.~\ref{sec:ablation}, we will also test \method's ability to combine both collision avoidance and semantic guidance. 

We configure the collision barrier distance for \method at $d=0.15$ meter, with collision and semantic guidance weights set to $\lambda_C = 0.02$ and $\lambda_S = 5.0$, respectively, alongside an auxiliary orientation term weighted at $0.02\lambda_S$.

\vspace{0.05in}
\noindent\textbf{Results:} Fig.~\ref{fig:sim_vis} shows the experiment setup and qualitative results. All experiments are conducted over 50 independent trials, and we report the Success Rate and Safety Rate. 
As detailed in Fig.~\ref{fig:sim_res}, our method significantly outperforms the unguided baseline across all settings. For the collision avoidance tasks, as expected, it dramatically boosts the safety rate, and even slightly improves success rates. On semantic grounding, \method's guidance is directly relevant to task success. Figure \ref{fig:sim_vis} shows some example results for \method and the base VLA.
%

\begin{figure}
    \centering
        \includegraphics[width=1.0\linewidth]{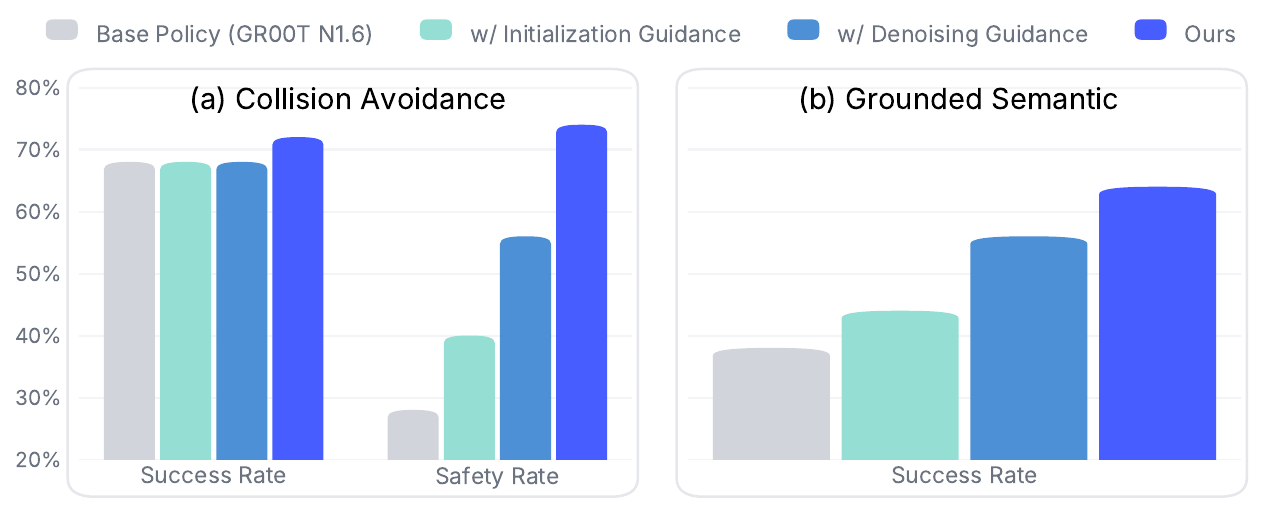}
            \vspace{-2em}

        \caption{\textbf{Quantitative Results of Simulation Experiments.} \method is an effective method for enhancing base VLA's task performance and safety. Each component, including the initialization and denoising guidance, yields an improvement over the base policy, with denoising guidance being more effective, while our combined method produces the largest boost.}
    \label{fig:sim_res}
    \vspace{-2em}
\end{figure}

\vspace{0.5em}
\subsection{Ablations and Further Analyses}\label{sec:ablation}


\noindent\textbf{Impacts of different components of \method:} \method consists of an optimized initialization of the noise via guidance evaluation (Eqn.~\ref{eqn:init}) followed by denoising guidance (Eqn.~\ref{eqn:intermediate}). To quantify each component's contribution, we conduct an ablation study. As shown in Fig.~\ref{fig:sim_res}, initialization guidance improves success by 8\% and reduces collisions by 18\%, while denoising guidance yields larger gains (20\% success, 34\% collision reduction). Combined, they improve success by 26\% and reduce collisions by 46\%.


\begin{figure*}
    \centering
    \includegraphics[width=1.0\linewidth]{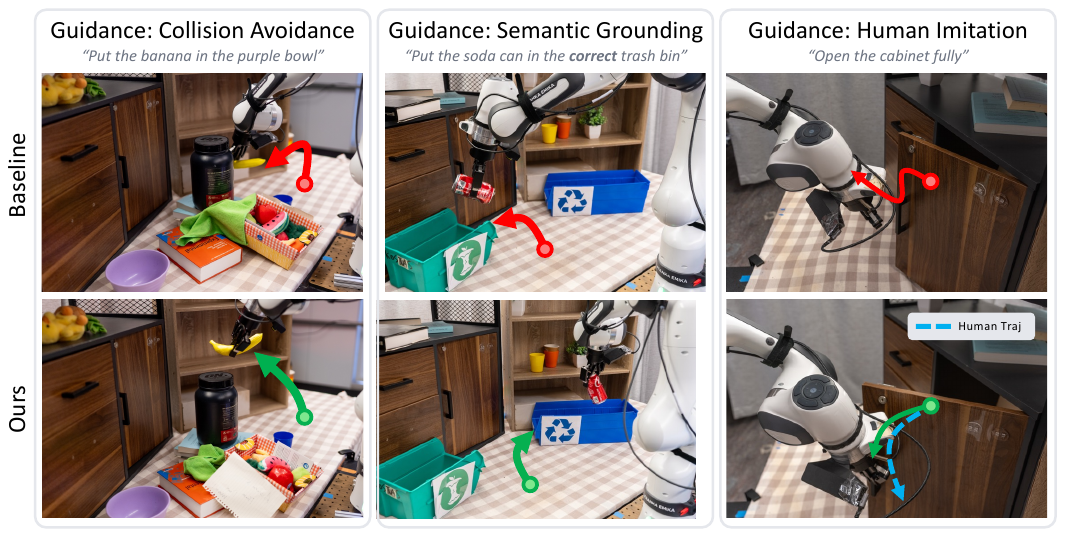}
    \vspace{-2em}
    \caption{\textbf{Real-world Qualitative Results}: Our method can handle different types of guidance including 3D awareness (left), semantic reasoning (middle), and human demonstration (right). The top row shows the base $\pi_{0.5}$ policy struggling to complete the tasks while the bottom row features the method's successes. Arrows illustrate the end effector trajectories.}
    \label{fig:real_viz}
\end{figure*}

\begin{figure}
    \centering
    
    \includegraphics[width=\linewidth]{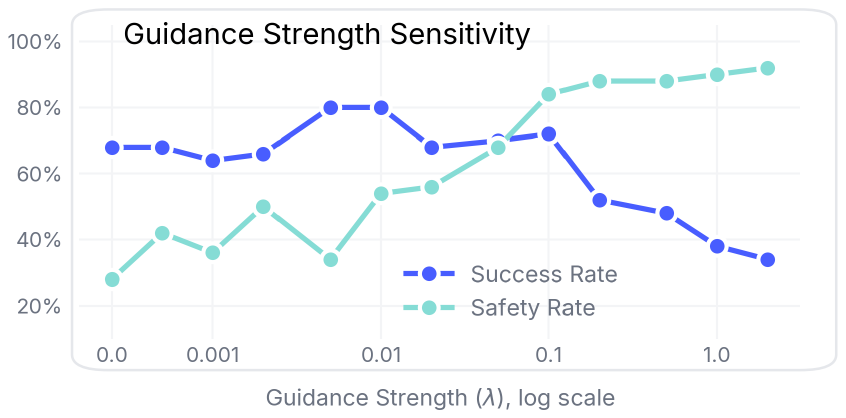}
        \vspace{-2em}

    \caption{\textbf{Guidance Strength Sensitivity Analysis:} As the collision guidance strength increases, the safety rate improves. However, excessively high guidance strength interferes with the base policy's naturalness prior, eventually hindering task performance. Additionally, the semi-logarithmic plot reveals there is a wide range of viable hyper-parameters that can balance between the policy's prior and the external guidance, achieving both high success and safety rates.}
    \label{fig:lambda}
\end{figure}

\vspace{0.5em}

\noindent\textbf{Sensitivity Analysis of Guidance Strength:} What are the trade-offs between adherence to the pretrained VLA prior and the strength of external guidance? We investigate the system's sensitivity to the guidance weight $\lambda$ in the Robocasa collision avoidance task.
To strictly isolate the influence of the guidance gradient, the initialization guidance is disabled for this analysis. The trajectory of metrics in Fig.~\ref{fig:lambda} illustrates a clear trade-off: as guidance strength increases, the safety utility improves; however, excessively high weights eventually compete with the base policy's task objective, leading to a decline in success rate. Crucially, the semi-logarithmic plot reveals that our framework is robust to hyperparameter variations,  i.e., there is a wide range of viable hyper-parameters that can balance between the policy's prior and the external guidance, achieving both high success and safety rates.

\vspace{0.5em}

\noindent\textbf{Synergy of Combined Guidance:} Can multiple guidance objectives be composed without destructive interference? We evaluate the composability of our framework by deploying the policy in a cluttered \textit{Multi-Choice} environment, a scenario necessitating simultaneous semantic grounding and obstacle avoidance. As summarized in Fig.~\ref{fig:synergy}, while each guidance modality proves effective in isolation, their concurrent application yields cumulative improvements. This demonstrates that the unified energy formulation can successfully coordinate heterogeneous objectives—steering the robot toward the correct semantic target while maintaining safety—without destructive interference.

\begin{figure}
    \centering

        \includegraphics[width=\linewidth]{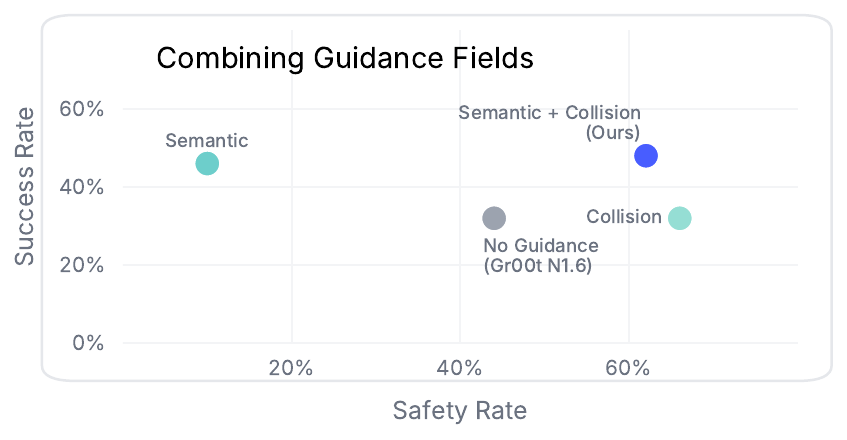}
        
        \vspace{-1em}
    \caption{\textbf{Composing guidance fields}: Semantic guidance improves the success rate while collision guidance improves the safety rate. By composing both guidance fields, \method simultaneously enhances safety and task performance.}
           \vspace{-2em}
 \label{fig:synergy}
\end{figure}

\subsection{Real-World Robot Experiments}

To validate \method's effectiveness in a real-world environment, we conduct experiments on a DROID~\citep{khazatsky2024droid} platform featuring a 7-DoF Franka Research 3 robot arm. The perception stack consists of a wrist-mounted ZED Mini and three stationary ZED 2 cameras, with 3D scene geometry reconstructed via VGGT~\cite{wang2025vggt} to provide the point cloud for guidance. We employ $\pi_{0.5}$~\citep{intelligence2025pi05} as the base generalist policy, and Gemini-2.5-Flash \cite{team2025gemini} to provide semantic guidance. We evaluate nine experimental configurations across three guidance modalities including collision avoidance, grounded semantic and human imitation. Additionally, to demonstrate \method's potential as a universal framework for all kinds of guidance fields, for each guidance type, we also compare \method against one specialized prior method specifically designed to solve that task.

\vspace{0.5em}

\noindent\textbf{Guidance Tasks:} We design a challenging suite of tasks inspired by prior works including three different tasks for each guidance modality. For each task, an experiment is repeated across five different trials, each with varying environment conditions (e.g., object placement, object type). For collision avoidance, the three tasks are avoiding collision with a static barrier (``static"), with suddenly appearing obstacle (``dynamic"), and reacting to human blockage (``reactive"), following \cite{dalal2024neural, yang2025deep}. For grounded semantic, the robot must reason about the given instruction to solve \textit{Multi-Choice} tasks following \citep{li2025hamster, cheang2025gr}. For the ``Picture" tasks, the robot must pick up an object and place it in a bowl closer to a picture of a named celebrity. For the ``Objects" task, five different objects are present, and the robot must pick up the correct object as instructed. For the ``Trash Cans" task, the object must correctly place the soda can in a recycled bin as opposed to a compost bin. For human imitation, common articulation manipulation tasks including opening a drawer, a cabinet, and a miniature oven are considered. 
\begin{figure*}
    \centering
    \includegraphics[width=\linewidth]{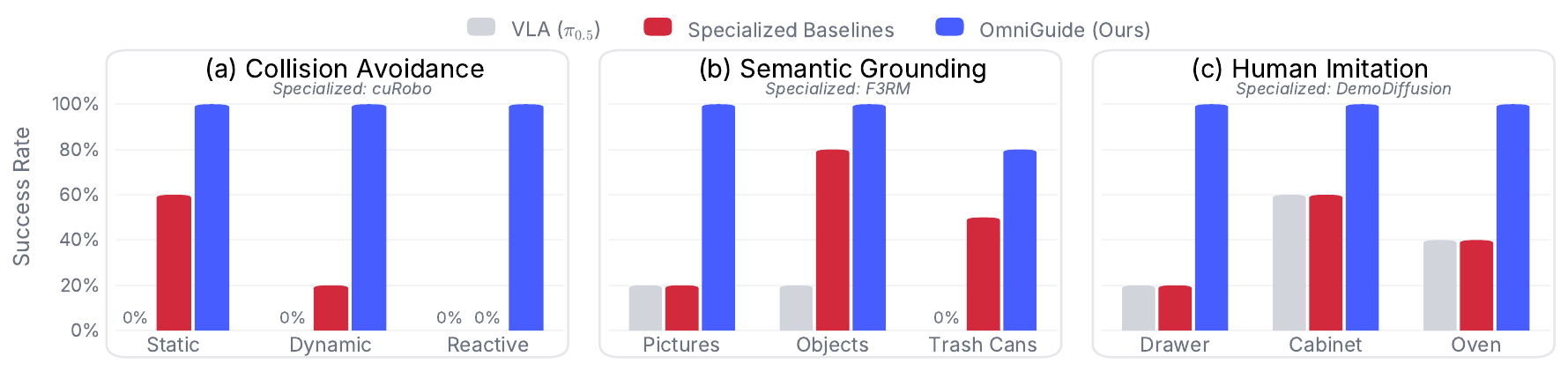}
        \vspace{-2em}
    \caption{\textbf{Quantitative Results of Real-world Experiments.} \method consistently and significantly outperforms the base VLA and other specialized methods across three guidance modalities and nine tasks.}
    \label{fig:realworld_results}
\end{figure*}

\noindent\textbf{Baselines:}
For collision avoidance, we compare against a post-hoc strategy to make VLA's generated trajectories safe. Specifically, given a trajectory, we construct a cost function comprising of an alignment term (staying closer to the initial trajectory), a goal-reaching term, and a collision avoidance term. We then initialize the trajectory with the VLA's output, and iteratively optimize this cost function using a combination of stochastic sampling MPPI \cite{williams2016aggressive} and gradient-based L-BFGS \cite{liu1989limited} optimization. The implementation is done efficiently using GPU-accelerated operations in cuRobo \cite{curobo_report23}. For semantic grounding, we compare against F3RM \cite{shen2023distilled}, a method for localizing relevant objects in the scene using distilled CLIP features. Lastly, for human imitation, we compare against DemoDiffusion \cite{park2025demodiffusion}, a recent approach that initializes the VLA's initial noise distribution with a kinematically targeted human trajectory. 

\noindent\textbf{Results:} As demonstrated in Fig.~\ref{fig:realworld_results}, \method substantially outperforms the base VLA and specialized baselines across all tasks. Qualitative results are included in Fig.~\ref{fig:real_viz}.

For collision avoidance, we observe that when the base VLA outputs a disastrously unsafe trajectory, it is very difficult for a post-hoc method like cuRobo to mitigate. \method, in contrast, intervenes with guidance during the denoising process, ensuring that the generated actions are safe. 

For semantic grounding, we found that by leveraging a capable VLM \cite{team2025gemini} for high-level guidance, we can enable the robot to perform manipulation requiring more nuanced reasoning (e.g., ``where to place the soda can") or world knowledge (e.g., ``put the object in the bowl next to celebrity X") that the base VLA struggles to solve. The specialized baseline F3RM, which uses the less expressive CLIP features, similarly struggles with complex reasoning and world knowledge. Additionally, this method also requires a few human demonstrations to learn how to grasp and place the object once localized. In contrast, \method inherits the manipulation capabilities of the base VLA. 

Lastly, for human imitation, \method also outperform the base VLA and the specialized method. We found two critical limitations of DemoDiffusion. Firstly, DemoDiffusion provides guidance in an open-loop manner: the next action chunk from the human trajectory will be provided as guidance regardless of whether the previous chunk was successfully imitated. This open-loop design leads to suboptimal behavior, such as the robot gripper closing and moving away from the cabinet even when the robot has not successfully grasped the handle in the previous step. \method, in contrast, grounds guidance in the Cartesian space and employs an adaptive and sequential matching process (Alg~\ref{alg:alignment}) to determine the optimal human's trajectory segment to imitate. Secondly, Demodiffusion only influence the VLA's generative process by shaping the initial noise distribution, while \method provides constant denoising guidance, which is much more effective as illustrated in the previous ablation (Fig. \ref{fig:sim_res}).

\vspace{0.05in}
\noindent\textbf{Latency overhead of \method:} Real-time robot deployment demands that a method can produce action predictions without significant delays. In Fig. \ref{fig:latency}, we compare the inference speed of \method against the base VLA, broken down by different components, measured on a single Nvidia RTX 5090 GPU. We found that in $\pi_{0.5}$, computing the KV cache for the VLM backbone is much more expensive than denoising actions on the action expert. Thus, we parallelize this KV computation with other components in our method, including VGGT point-cloud construction and CLIP computation. We found that our guided denoising also introduces some latency due to the added computation of guidance $\nabla_{\mathbf{A}^{\tau}} \log p(\mathbf{y}|\mathbf{A}^{\tau})$. In practice, this leads to a reduction of control frequency from 30Hz to around 15Hz, which is still fast enough for our policy to be reactive as illustrated by the ``reactive" experiment in Fig.~\ref{fig:realworld_results}.

\begin{figure}
    \centering
    \includegraphics[width=\linewidth]{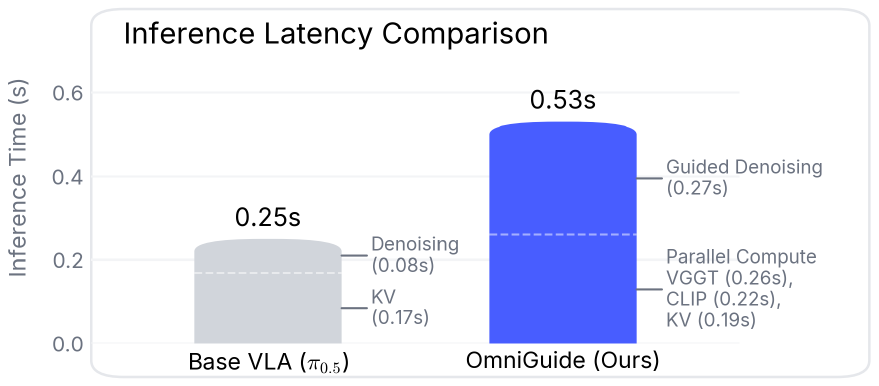}
        \vspace{-2em}

    \caption{\textbf{Latency Analysis:} we parallelize three expensive operations including KV cache, CLIP, and VGGT computation to improve latency. Compared to the base VLA, \method is about two times slower. Nonetheless, it is still fast enough at 15 Hz to be deployed as a reactive policy in the real-world.}
        \vspace{-0.5em}
    \label{fig:latency}
\end{figure}

\section{Conclusion}









\method combines the strengths of both components: the data-driven VLA prior covering from high-level planning to contact-awareness, and the precise, task-specific, and environmental structure imposed by external guidance. This combination enables controllable yet natural behavior that neither component could achieve on its own. We showed that \method improves task success and safety across guidance tasks and environments. Our results also show why pretrained VLA priors still remain essential: Guidance fields alone cannot reliably resolve kinematic mismatches in human-to-robot retargeting, avoid local minima inherent to potential-field formulations, or model complex contact dynamics.  There are several promising directions for addressing current limitations. Incorporating object-centric representations, such as hand-object geometry and articulated contact states, could improve dexterity. Other modalities such as force sensors, point-tracks \cite{bharadhwaj2024track2act}, AI-generated videos \cite{wiedemer2025video}, retrieved actions \cite{sridhar2025ricl}, or UMI trajectories \cite{chi2024universal} can also be incorporated as guidance in the future.



\textbf{Acknowledgement} The financial support by the grants NSF FRR 2220868, NSF IIS-RI 2212433, ONR N00014-22-1-2677, and NSF SLES 2331783, NSF CAREER 2239301, DARPA TIAMAT HR00112490421 is gratefully acknowledged.
\newpage



\bibliographystyle{plainnat}
\bibliography{references}

\clearpage
\section{Appendix}



\subsection{Simulation Experiment Setup}

We choose the NVIDIA Gr00t N1.6 \cite{gr00tn1_2025} model as our base policy for simulation experiments. The policy performs flow-matching denoising in the latent space and decodes the latent into action chunks that contain relative end effector positions $\{ \Delta\mathbf{x}_i\}$, and orientations $\{ \Delta \mathbf{r}_i\}$ in the axis-angle convention, whose corresponding rotation matrix is computed as $R(\Delta \mathbf{r}_i)$. We use the Euler method to estimate the absolute robot positions and orientations as
\begin{align}
    & \mathbf{x}_i = \mathbf{x}_{i-1} + \mathbf{\gamma}_x \odot \Delta \mathbf{x}_i \\
    & R_i = R( \gamma_r \odot \Delta \mathbf{r}_i) R_{i-1}
\end{align}
where $\gamma_x$ and $\gamma_r$ are dynamic parameter vectors estimated from simulator properties, and $\odot$ denotes element-wise product. We set $\gamma_x = [0.011, 0.011, 0.02]$ and $\gamma_r = [0.15, 0.15, 0.15]$.

\begin{figure}[!h]
    \centering
        \includegraphics[width=0.8\linewidth]{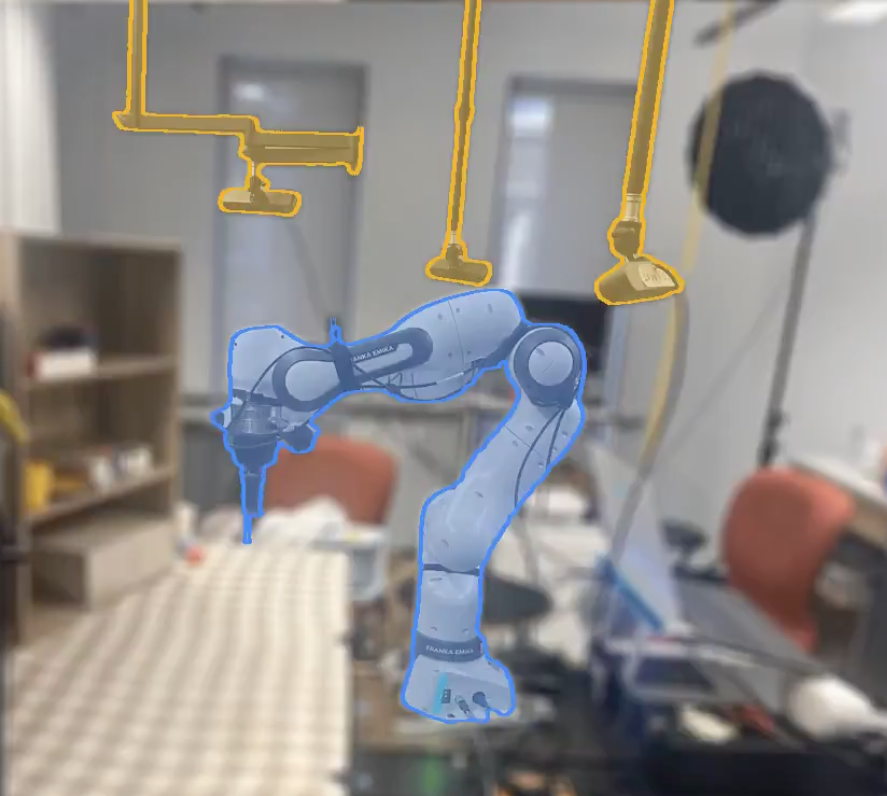}

        \caption{\textbf{Hardware Setup.} We use a standard 7-DOF Franka Emika Panda arm with a Robotiq 2-fingered gripper and three Zed 2 stereo cameras to provide left, right and overhead views for 3D reconstruction.}
    \label{fig:hardware}
\end{figure}

\begin{figure}[!h]
    \centering
    \begin{minipage}{0.48\textwidth}
        \centering
        \includegraphics[width=\textwidth]{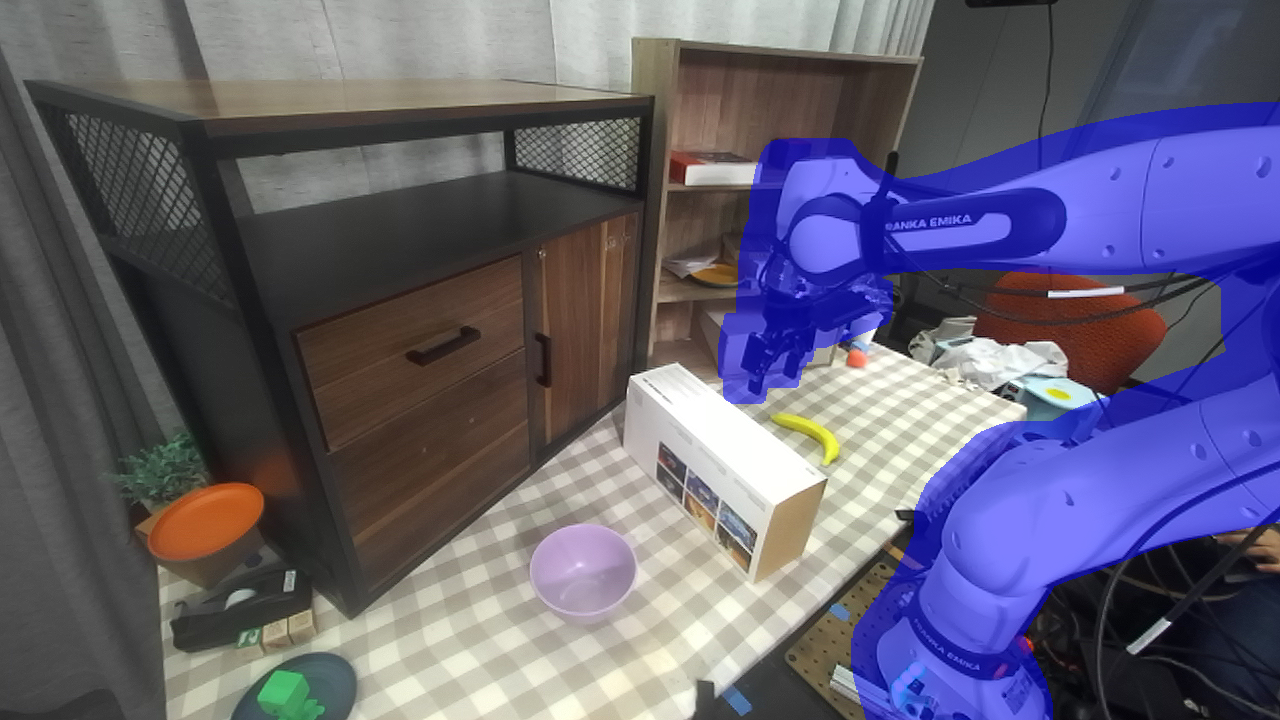}
    \end{minipage}
    \begin{minipage}{0.48\textwidth}
        \centering
        \includegraphics[width=\textwidth]{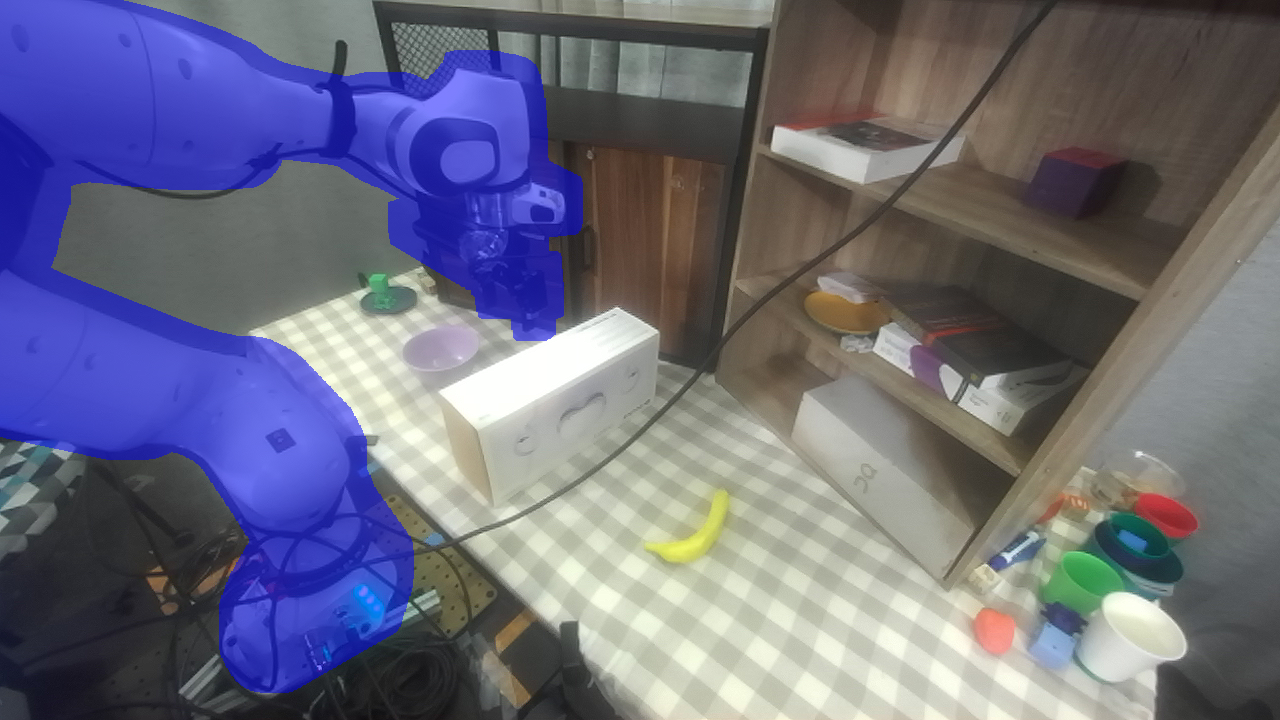}
    \end{minipage}


    \begin{minipage}{0.48\textwidth}
        \centering
        \includegraphics[width=\textwidth]{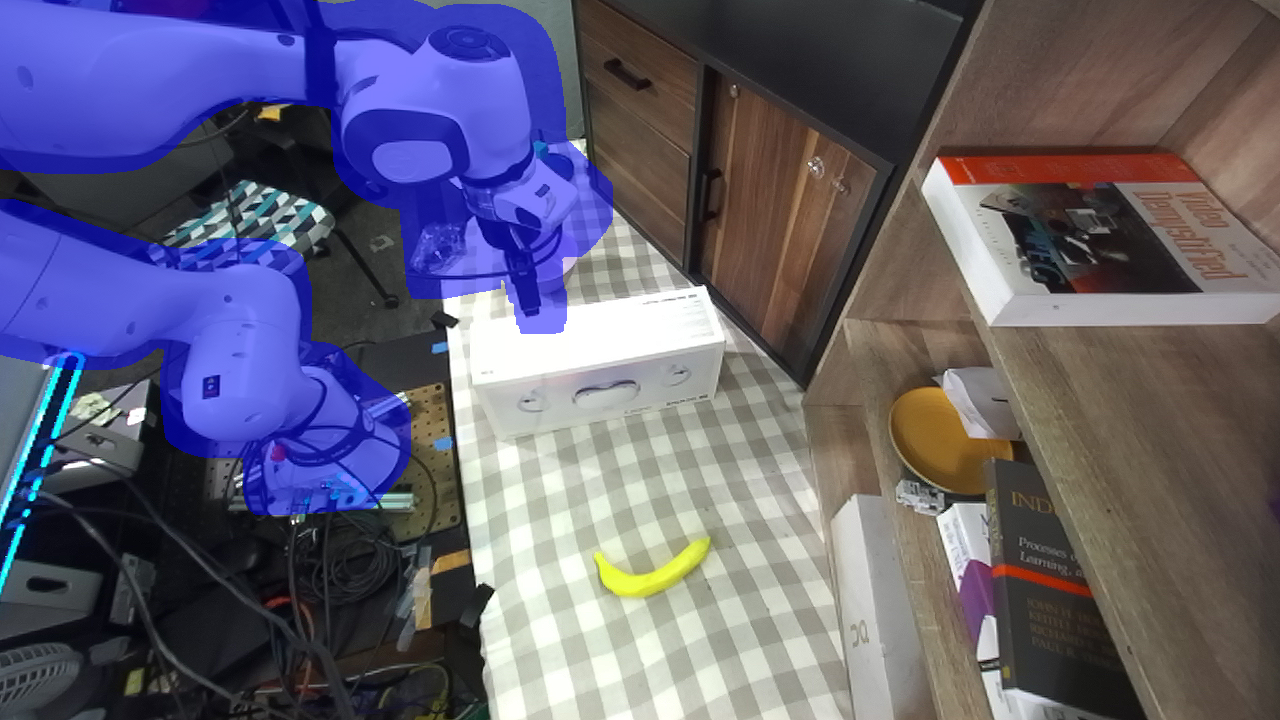}
    \end{minipage}
    \begin{minipage}{0.48\textwidth}
        \centering
        \includegraphics[width=\textwidth, trim=0 20 0 20, clip]{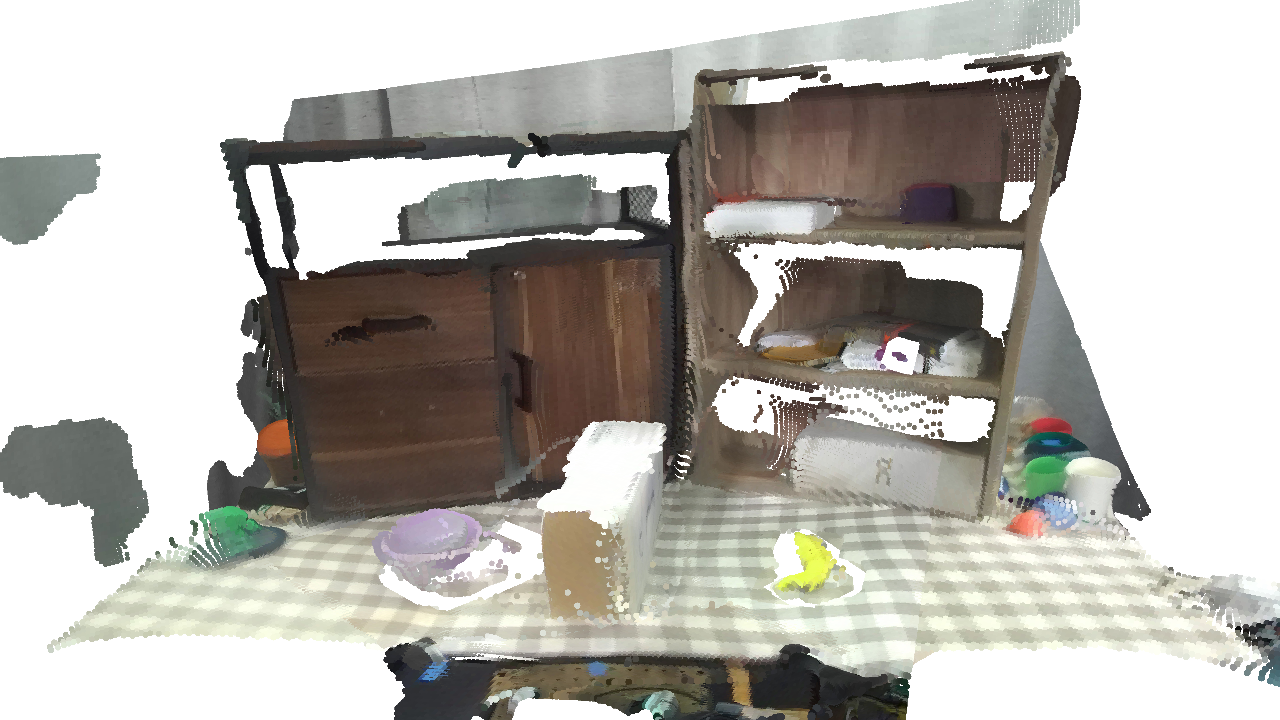}
    \end{minipage}
     
    \caption{\textbf{VGGT 3D Reconstruction}. From three camera views, we construct a 3D point cloud of the scene. We have a left, right, and overhead camera view. We also ensure that the scene has proper metric scale by comparing between the ground-truth camera pairwise distance against the VGGT's predicted camera distance, and rescaling the depth accordingly. Shaded blue regions are the rendering of the robots in simulation by Pybullet \cite{benelot2018} using the calibrated camera extrinsic and then super-imposed on the images . The robot is removed from the scene point-cloud. \label{fig:camera}}
    \vspace{-0.5cm}
\end{figure}

\subsection{Real-World Experiment Hardware Setup} \label{sec:hardware}

We followed the DROID robot setup~\citep{khazatsky2024droid}, which consists of a 7 DoF Franka Emika Panda Robot Arm, a Robotiq 2F-85 parallel-jaw gripper, a wrist-mounted ZED Mini RGB-D camera and two side-mounted ZED 2 stereo cameras on the left and right side view. We additionally mounted an overhead camera to provide more coverage for 3D reconstruction (Fig.~\ref{fig:hardware}).
\vspace{0.5em}

\textbf{VGGT Metric-Scale Reconstruction:} We use VGGT \cite{wang2025vggt} to construct a pointmap from each of the RGB view, resulting in a 3D point-cloud. 
However, VGGT predicted depth is scale-invariant. To achieve the metric-accurate reconstruction required for precise manipulation, we perform a scale-alignment procedure. We define the ground-truth Euclidean distance between a calibrated camera pair as $d_{gt}$. Given the VGGT-predicted camera extrinsics, we compute the relative predicted distance $d_{pred}$. The point cloud depth is then rescaled by the ratio $\kappa = d_{gt} / d_{pred}$. This ensures the resulting point cloud aligns with the physical dimensions of the robot's workspace as shown in Fig.~\ref{fig:camera}.


\vspace{0.5em}

\textbf{Task-relevant Region Detection and Filtering}: To facilitate collision-free planning, we must distinguish between the static environment, the robot's own body, and task-relevant objects. For the robot body, we use Pybullet simulation \cite{benelot2018} to render the robot URDF synchronized with the robot's calibrated extrinsics, and superimpose the resulting mask onto the camera frames (Fig.~\ref{fig:camera}). This allows us to remove the robot body from the aggregate point cloud. 

For object removal, we employ a variant of the CLIP model \cite{qiu2024feature, zhou2022extract} to extract dense patch-level visual feature embeddings of each camera frame. These features are projected into 3D space using the VGGT pointmaps. Next, for each 3D point $i$ in the reconstructed point cloud, we compute the task relevance score
\begin{equation} s_i = \frac{\mathbf{e}_{i} \cdot \mathbf{e}_{\text{text}}}{|\mathbf{e}_{i}| |\mathbf{e}_{\text{text}}|} \end{equation}
using the cosine similarity between the visual embedding $e_i$ and the task instruction $e_{\text{text}}$. The 3D points scoring in top 95\% percentile are marked as task relevant and removed from the collision point cloud. A visualization of the heatmap of the CLIP similarity and the resulting filtered point-cloud is provided in Fig.~\ref{fig:clip}.


All of our inference including VGGT, CLIP, guidance computation and denoising is run using a single Nvidia RTX 5090 GPU.

\begin{figure}[!h]
    \centering
    \begin{minipage}{0.48\textwidth}
        \centering
        \includegraphics[width=\textwidth]{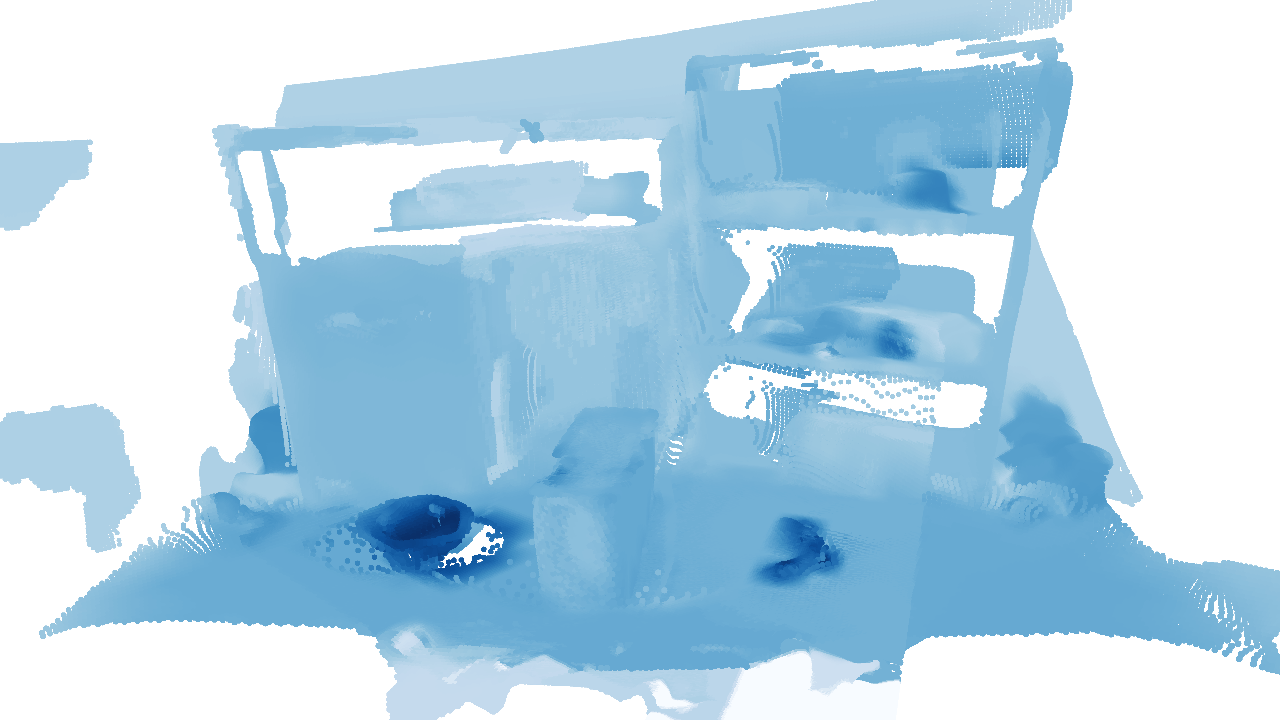}
    \end{minipage}
    \hspace{2mm}
    \begin{minipage}{0.48\textwidth}
        \centering
        \includegraphics[width=\textwidth]{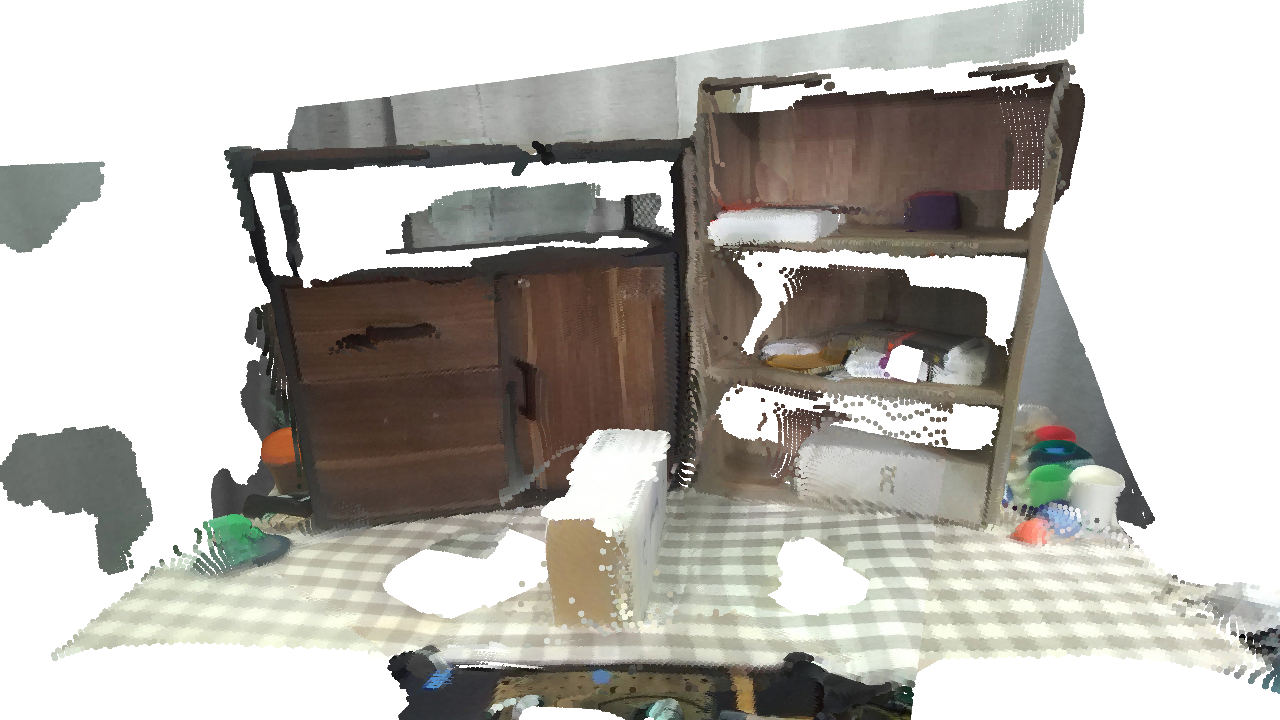}
    \end{minipage}
    \caption{\textbf{Detecting Task-relevant Regions}. We use patch-level CLIP to compute 3D visual features of the point cloud. We then compute the cosine similarity of each 3D point against the language instruction. The top image is the heatmap of similarities for the task ``put the banana in the purple bowl". Task-relevant regions are defined to be the 3D points scoring in the top 95\% percentile in CLIP similarity. The bottom image shows the filtered point cloud where the banana and the purple bowl were correctly removed using our procedure.\label{fig:clip}}
\end{figure}

\textbf{VLM Pointing Guidance:}
To provide grounded semantic guidance, we leveraged a capable VLM, Gemini-2.5-Flash \cite{comanici2025gemini}, to point to the relevant target in the scene (Fig.~\ref{fig:pointing}). Specifically, the VLM first provide a list of the 2D coordinates of all interesting points in the scene. Then, for each 2D candidate, we visually highlight the point and query the VLM whether it is relevant to a given task. At the end, we keep one 2D point. We then lift this 2D coordinate to 3D via VGGT predicted pointmap. This provides a semantic target for guiding VLAs in tasks requiring high-level reasoning.

\begin{figure}[!h]
    \centering
    \begin{minipage}{0.48\textwidth}
        \centering
        \includegraphics[width=\textwidth]{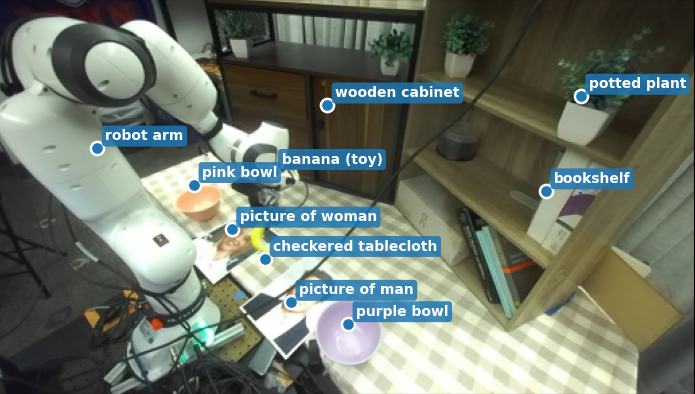}
    \end{minipage}
    \hspace{2mm}
    \begin{minipage}{0.48\textwidth}
        \centering
        \includegraphics[width=\textwidth]{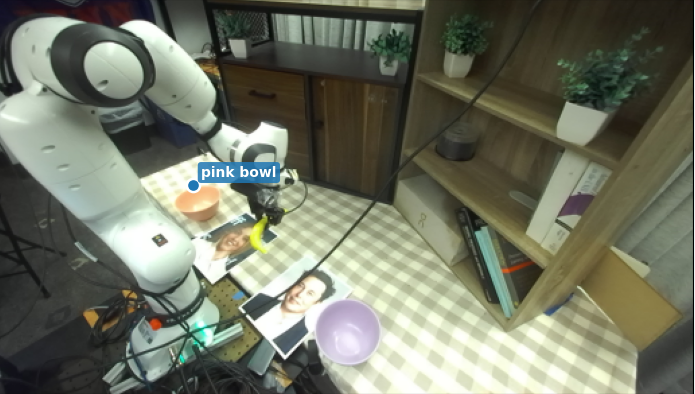}
    \end{minipage}


        \caption{\textbf{VLM Pointing Guidance}. (Top) First, the VLM points to all interesting points in an image. (Bottom) Then, we visually highlight each point in the image, and ask the VLM whether that point is relevant to a given task. One point is kept at the end. In this example, the VLM successfully localized the relevant object for the task of putting the object in a bowl closer to a named celebrity.\label{fig:pointing}}

\end{figure}




\subsection{Baseline Implementation Details}
\textbf{cuRobo}: The base VLA policy generates an action chunk \(
\mathbf{A}_{t:t+H} = [\mathbf{a}_t, \mathbf{a}_{t+1}, \dots, \mathbf{a}_{t+H-1}]
\) where $H$ is the chunk size. The robot typically only executes a portion $h$ of such chunk. For our $\pi_{0.5}$ setup, $H=15$ and $h=8$. The goal of this baseline is thus to transform the effective chunk \(
\mathbf{A}_{t:t+h}\) into a collision-free trajectory $Q^*$. This is done by optimizing a given cost function by stochastic sampling  MPPI \cite{williams2016aggressive} and gradient-based optimization L-BFGS \cite{liu1989limited}. We use cuRobo default values of $2$ iterations of MPPI followed by $100$ iterations of L-BFGS. MPPI aggregates $25$ candidates per seed. The optimization objective $J(\mathcal{Q})$ minimizes the deviation from the VLA's reference actions while strictly penalizing collisions and joint limit violations:
\begin{equation}
\begin{aligned}
    \mathcal{Q}^* = \arg \min_{\mathcal{Q}} \sum_{k=t}^{t+h} \Big( & w_{\text{align}} \|\mathbf{q}_k - \mathbf{q}_k^{\text{ref}}\|^2 \\
    & + w_{\text{coll}} \mathcal{C}_{\text{sdf}}(\text{FK}(\mathbf{q}_k)) \\
    & + w_{\text{bound}} \mathcal{C}_{\text{bound}}(\mathbf{q}_k) \Big) + w_{\text{goal}} J_{\text{goal}}(\mathbf{q}_h)
\end{aligned}
\end{equation}
where $\mathbf{q}k^{ref}$ is the joint-space reference derived from the VLA action $\mathbf{a}_k$, $ \mathcal{C}_{\text{sdf}}$ represents the collision cost calculated via a voxelized Signed Distance Field (SDF) queried by robot collision spheres, $ \mathcal{C}_{\text{bound}}$ enforces physical joint and velocity limits, $J_{\text{goal}}$ measures how closely the final cartesian coordinate matches the goal where joint-to-cartesian conversion was done using forward kinematics. To ensure safety in complex environments, we tune $ w_{\text{coll}}=1e7, w_{\text{align}} = 100$, prioritizing collision avoidance over strict adherence to the VLA's initial plan. $w_{\text{bound}}=0.1$ is kept at the cuRobo's default value.

\vspace{0.5em}

\textbf{F3RM}: We followed the setup in F3RM \cite{shen2023distilled}. The robot holds a Zed camera in its gripper and scan the scene. CLIP features and then computed and lifted to 3D via aligned Colmap \cite{schoenberger2016sfm}. During scanning, we also saved the end-effector poses and use these labels to align Colmap predictions to metric depth by predicting a scale by simple RANSAC \cite{fischler1981random} and a rigid registration between the end-effector points and Colmap predicted points via the Kabsch algorithm \cite{lawrence2019purely}.

\vspace{0.5em}

\textbf{DemoDiffusion}: We followed the setup in DemoDiffusion \cite{park2025demodiffusion} closely with one modification. We replace the original HaMer \cite{pavlakos2024reconstructing} system with the more recent Haptic method \citep{ye2025predicting} for 3D human motion capture and reconstruction. Note that our method, \method, also use Haptic for the human imitation guidance experiments.

\subsection{Mathematical Details}
\begin{theorem}
The integral value $Z = \int_\Omega \mathrm{SDF}_O(\mathbf{x}) \, d\mathbf{x}$ over ${\Omega}$ is finite on the domain ${\Omega} = \{\mathbf{x} \in \mathbb{R}^3 \mid 0 < \mathrm{SDF}_O(\mathbf{x}) \le d\}$, where $d\in(0,\infty)$.
\end{theorem}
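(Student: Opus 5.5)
The plan is to bound the integrand and the domain separately. On $\Omega$ we have $0 < \mathrm{SDF}_O(\mathbf{x}) \le d$ by definition, so
\[
0 \le Z \le d \cdot \mathrm{vol}(\Omega).
\]
It therefore suffices to show that $\Omega$ is measurable and has finite Lebesgue measure. Measurability is immediate once we note that $\mathrm{SDF}_O$ is continuous. It is in fact $1$-Lipschitz as a distance function to the occupied set. In the discrete trilinear implementation it is also continuous, being piecewise polynomial on a grid. Hence $\Omega$ is the preimage of the interval $(0,d]$, a Borel set, and the integrand is a bounded measurable function on it.

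The key step is to show that $\Omega$ is bounded. The occupancy grid $O \in \mathbb{R}^{H\times W\times L}$ has finitely many cells, so the occupied set $S$ is a compact subset of a bounded box $B$. For $\mathbf{x}$ outside the obstacles, $\mathrm{SDF}_O(\mathbf{x}) = \operatorname{dist}(\mathbf{x}, S)$. So $\mathrm{SDF}_O(\mathbf{x}) \le d$ implies $\operatorname{dist}(\mathbf{x}, S) \le d$. This places $\Omega$ inside the closed $d$-neighborhood $S_d = \{\mathbf{x} : \operatorname{dist}(\mathbf{x},S)\le d\}$. Next we fix any $\mathbf{s}_0 \in S$ and write $D = \operatorname{diam}(B)$. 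Then every $\mathbf{x}\in\Omega$ satisfies
\[
\|\mathbf{x}-\mathbf{s}_0\| \le d + D,
\]
so $\Omega$ lies in a ball of radius $D+d$. It follows that
\[
\mathrm{vol}(\Omega) \le \tfrac{4}{3}\pi (D+d)^3 < \infty,
\]
and hence
\[
Z \le \tfrac{4}{3}\pi\, d\,(D+d)^3 < \infty .
\]
Nonemptiness of $S$ is needed here: if $S=\emptyset$ the SDF is $+\infty$ everywhere, so $\Omega=\emptyset$ and $Z=0$ trivially.

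The main obstacle is making precise what $\mathrm{SDF}_O$ means outside the voxel grid. The discrete SDF is defined only on grid points and interpolated, so one must fix an extension to all of $\mathbb{R}^3$. I would adopt the convention that outside the grid the SDF equals the true Euclidean distance to the occupied cells. Any extension that grows at least like the distance to $S$ up to an additive constant would also do. Under such a convention the boundedness argument above goes through, possibly with $d$ replaced by $d$ plus a grid-resolution constant.

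Finally, the paper needs $Z>0$ for $p_C$ to be a genuine normalization, although the statement itself asks only for finiteness. When $S\neq\emptyset$, continuity of the SDF together with $d>0$ shows that $\Omega$ contains an open shell around $S$ on which the SDF is positive, so $Z>0$.
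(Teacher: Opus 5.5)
Your proof is correct and follows the same route as the paper: bound the integrand by $d$ on $\Omega$ to get $Z \le d \cdot V(\Omega)$. The paper simply takes $V(\Omega)<\infty$ as given, calling $\Omega$ a ``bounded finite risk region''; you actually derive it by placing $\Omega$ inside the $d$-neighborhood of the compact occupied set, which supplies the one step the paper leaves implicit.
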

\begin{proof}
$$Z = \int_\Omega \text{SDF}_O(\mathbf{x}) \, d\mathbf{x} \le \int_\Omega d \, d\mathbf{x} = d \cdot V(\Omega)$$
where $V(\Omega)$ is the volume of the finite domain.
\end{proof}



\subsection{Orientation Guidance}
Given the current end effector Cartesian position $\mathbf{x}$ and the target position $\mathbf{x}^*$, we can compute the normalized target gripper orientation as
\begin{equation}
    \mathbf{r}^*= \frac{\mathbf{x}^*-\mathbf{x}}{||\mathbf{x}^*-\mathbf{x}||}
\end{equation}
We define the canonical orientation vector of the gripper in its local frame as $\mathbf{g}\in\mathbb{R}^3, ||\mathbf{g}||=1$. For example, the Franka arm gripper has the orientation vector $\mathbf{g}=[0, 0, 1]$ pointing to its z-axis. After the policy predicts the gripper orientation, we convert it into a rotation matrix $R$ and compute the current gripper orientation as $\mathbf{r} = R\mathbf{g}$. Now the orientation guidance energy can be expressed as
\begin{equation}
    \mathcal{L}_o(\mathbf{r}) = -\log p_R(\mathbf{r}) = \frac{||\mathbf{r}-\mathbf{r}^*||}{2\sigma_R^2} + \text{Const.}
\end{equation}
\end{document}